\theoremstyle{plain}
\newtheorem{remark}{Remark}
\newtheorem{theorem}{Theorem}
\newtheorem{lemma}{Lemma}
\newtheorem{prop}{Proposition}
\newtheorem{defi}{Definition}
\newtheorem{assu}{Assumption}
\newtheorem*{proof}{Proof}
\def\BibTeX{{\rm B\kern-.05em{\sc i\kern-.025em b}\kern-.08em
    T\kern-.1667em\lower.7ex\hbox{E}\kern-.125emX}}
\begin{document}

\title{Data-Heterogeneous Hierarchical Federated Learning with Mobility\\
}

\author{Tan Chen$^*$, Jintao Yan$^*$, Yuxuan Sun$^\dag$, Sheng Zhou$^*$, Deniz G\"und\"uz$^\ddag$, Zhisheng Niu$^*$\\
$^*$Beijing National Research Center for Information Science and Technology\\
Department of Electronic Engineering, Tsinghua University, Beijing 100084, China\\
$^\dag$School of Electronic and Information Engineering, Beijing Jiaotong University, Beijing 100044, China\\
$^\ddag$Department of Electrical and Electronic Engineering, Imperial College London, London SW7 2BT, UK\\
Email:\{chent21,yanjt22\}@mails.tsinghua.edu.cn, yxsun@bjtu.edu.cn,\\ sheng.zhou@tsinghua.edu.cn, d.gunduz@imperial.ac.uk, niuzhs@tsinghua.edu.cn}


\maketitle

\begin{abstract}
Federated learning enables distributed training of machine learning (ML) models across multiple devices in a privacy-preserving manner. Hierarchical federated learning (HFL) is further proposed to meet the requirements of both latency and coverage. 
In this paper, we consider a data-heterogeneous HFL scenario with mobility, mainly targeting vehicular networks. We derive the convergence upper bound of HFL with respect to mobility and data heterogeneity, and analyze how mobility impacts the performance of HFL. While mobility is considered as a challenge from a communication point of view, our goal here is to exploit mobility to improve the learning performance by mitigating data heterogeneity. Simulation results verify the analysis and show that mobility can indeed improve the model accuracy by up to 15.1\% when training a convolutional neural network on the CIFAR-10 dataset using HFL. 
\end{abstract}


\section{Introduction}

The advent of 5G has revolutionized intelligent vehicles, enabling them to generate and share significant data volumes through vehicle-to-everything (V2X) services\cite{elbir2022federated}. In this context, machine learning (ML) becomes an essential tool thanks to its ability to efficiently analyze vast amounts of data while adapting to the dynamics of the mobile environment\cite{sun2022meet,tan2020federated,liang2018toward}. Conventional ML solutions rely on offloading edge data to cloud servers. Such centralized solutions, however, encounter challenges of limited communication resources and privacy concerns in vehicular networks \cite{ye2020federated,posner2021federated,zhou2023toward}. As an alternative, federated learning (FL) has gained popularity for its ability to efficiently utilize communication resources while preserving privacy\cite{mcmahan2017communication}. Furthermore, to surmount the unstable communication links in cloud-based FL and the limited coverage and vehicle density in edge-based FL\cite{elbir2022federated}, hierarchical FL (HFL) is further proposed in \cite{liu2020client} and \cite{abad2020hierarchical}. The goal of HFL is to train a global model of a central cloud server in a federated manner with the help of edge servers. The edge servers are closer to the devices, and aggregate model parameters of the devices in their coverage area, while the global aggregation at the cloud server takes place less frequently, resulting in a trade-off between the training time and the covering vehicles.

In this paper, we consider HFL in a vehicular network (see Fig. \ref{HFL} for an illustration). Implementing HFL in vehicular networks faces two challenges. Firstly, different vehicles have different routes and may collect data with different statistics (e.g., different classes), resulting in the production of heterogeneous (or non-i.i.d.) data\cite{ayache2023walk,mcmahan2017communication}. Heterogeneous data causes the local objective function to diverge from the global objective function, thereby degrading the learning performance\cite{shi2020joint,wang2021addressing}.
By minimizing the Kullback-Leibler divergence among the data distributions across the edge servers, a user-edge association method is proposed in \cite{deng2021share} to reduce the total number of communication rounds. A user-edge association problem is presented in \cite{liu2022joint}, aiming to minimize the parameter difference between the global and optimal models. The connection between the parameter difference and the data distribution difference is then established. 

Secondly, unlike in traditional HFL, where the server and clients are fixed, the mobility of vehicles causes the network topology to be constantly changing. The impact of mobility in HFL is examined in \cite{feng2022mobility}. The convergence speed in relation to mobility rate is analyzed, showing that mobility decreases the convergence speed of HFL. An algorithm is then proposed to alleviate the impact of mobility by aggregating local models based on cosine similarity among model parameters. Experiments show the proposed algorithm improves the performance of HFL with heterogeneous data and mobility. However, the authors
view mobility as merely a negative factor in training. 

In reality, mobility has two-fold effects on HFL. On the one hand, the variations in channel quality make it difficult to estimate and adapt to channels for reliable communication. On the other hand, mobility also promotes the mixing of heterogeneous data, which can potentially improve the learning performance\cite{sun2022meet}.

In this work, we investigate the effect of mobility on the performance of HFL. The main contributions of this work are as follows.

\begin{itemize}

\item We analyze the convergence upper bound of HFL with respect to data 
 heterogeneity and mobility, showing that mobility  influences the data heterogeneity, and thus, the convergence upper bound.
 
\item Through numerical experiments on data-heterogeneous HFL with mobility, we show that mobility can indeed \textit{enhance} the convergence speed and accuracy of HFL. 

\end{itemize}

\begin{figure}[ht!]
\centering
\includegraphics[width=0.46\textwidth]{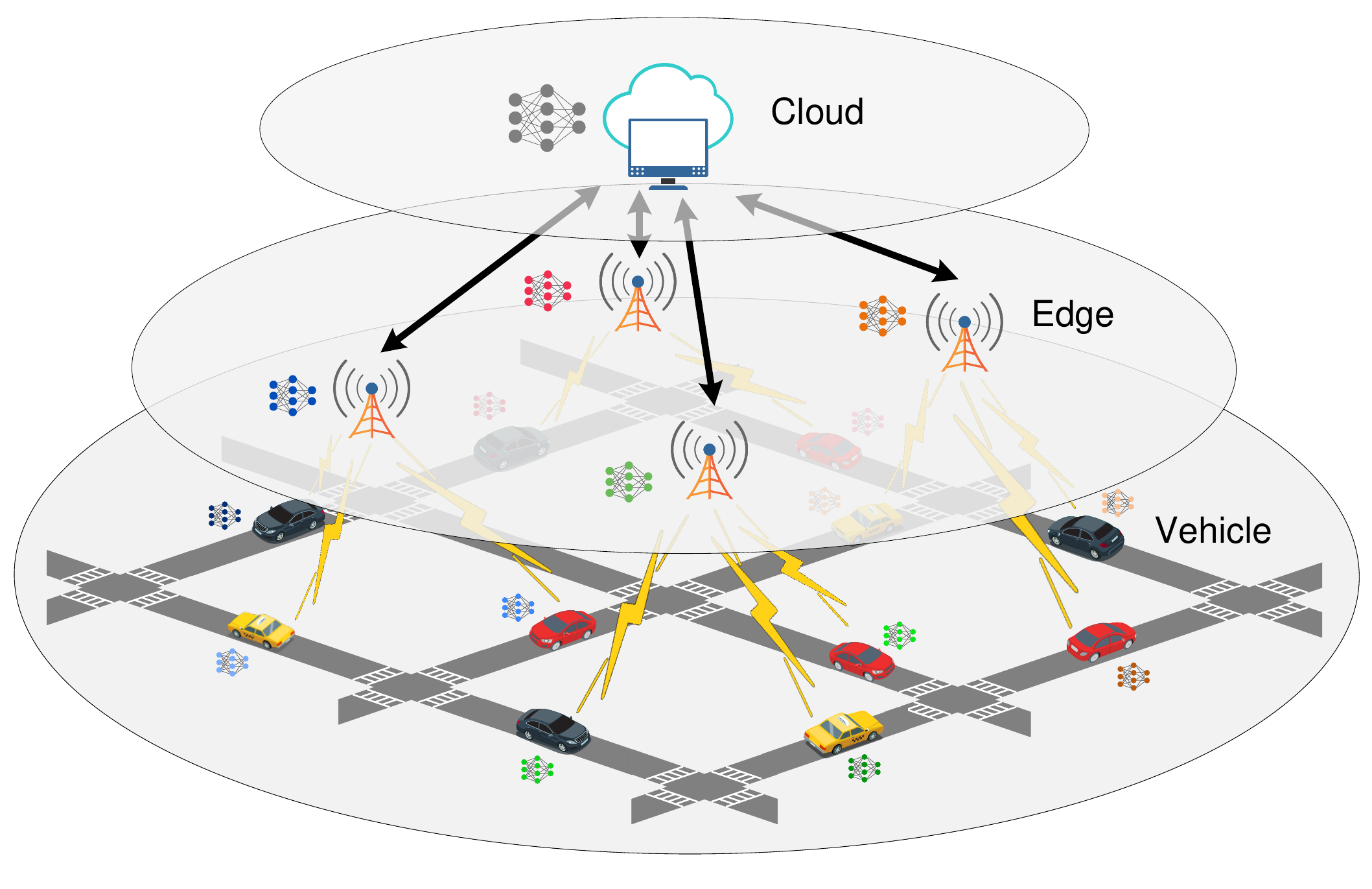}
\caption{Illustration of HFL in vehicular networks.}
\label{HFL}
\end{figure}

The rest of this paper is organized as follows. In Section 
\ref{Sec-3}, we describe the system model, characterize the learning task, and present the training algorithm. In Section \ref{Sec-4}, a convergence analysis is conducted, and a convergence bound is derived for data-heterogeneous HFL with mobility. Section \ref{Sec-5} presents simulation results and discussions. Finally, Section \ref{Sec-6} concludes the work.
\section{HFL SYSTEMS}
\label{Sec-3}


We consider HFL in vehicular networks. A central cloud server controls several edge servers, which can represent base stations or roadside units. Each edge server is static, and covers a limited area of streets used by the vehicles. The vehicles move on the streets stochastically and occasionally cross the coverage of an edge server. We assume that the cloud server has a large enough coverage, so all the vehicles are always within its coverage. Denote the cloud server as $c$, and assume that there are $N$ edge servers and $M$ vehicles. In practice, $M$ is much larger than $N$.  The vehicles in the system want to cooperatively train a model with the help of the cloud and edge servers.



The HFL task attempts to find the connection between inputs $\boldsymbol{x}_i$ and labels $y_i$ in the global dataset $\mathcal{D}=\{\boldsymbol{x}_i,y_i \}_{i=1}^{|\mathcal{D}|}$. Let $\mathcal{D}_m$ denote the dataset of the $m$-th vehicle; we have $\mathcal{D}=\cup_{m=1}^M \mathcal{D}_m$. 

For a sample $\{\boldsymbol{x}_i,y_i \}$, let $g_i \left(\boldsymbol{w}\right)$ be the sample loss function for model parameters $\boldsymbol{w}$. Then the loss function of the $m$-th vehicle is given by $f_m\left(\boldsymbol{w}\right)=\frac1{|\mathcal{D}_m|} \sum_{i\in \mathcal{D}_m}g_i \left(\boldsymbol{w}\right)$, and the global loss function is then determined by an average of the sample loss functions, $F\left(\boldsymbol{w}\right)=\frac1{|\mathcal{D}|} \sum_{i\in \mathcal{D}}g_i \left(\boldsymbol{w}\right)= \sum_{m=1}^M \frac{|\mathcal{D}_m|}{|\mathcal{D}|} f_m\left(\boldsymbol{w}\right) $. The training objective  of HFL is $\min_{\boldsymbol{w}} F\left(\boldsymbol{w}\right)$.

\subsection{Mob-HierFAVG}
The HierFAVG algorithm in \cite{liu2020client} provides a solution to the HFL problem. We build our solution upon HierFAVG by adding the mobility factor. 

We denote the gradient descent process on a batch of samples at each vehicle as a local update. We maintain an iteration enumerator $\tau$ shared by all nodes in the system, recording the number of local updates each vehicle has carried out in total. A synchronized system is assumed, so $\tau$ is tracked by all the vehicles. We denote the model parameters of the $m$-th vehicle, the $n$-th edge server, and the cloud server at iteration $\tau$ as $\boldsymbol{w}_{m}^{\left(\tau\right)}, \boldsymbol{w}_{e,n}^{\left(\tau\right)}, \boldsymbol{w}^{\left(\tau\right)}$, respectively.


We assume that the vehicles sample data before training, and the dataset carried by each vehicle do not change during training. All the participants initialize a global model $\boldsymbol{w}^{[0]}$, and each vehicle repeats updating the model with its local data.

For each local update, vehicle $m$ performs stochastic gradient decent using its local data by ${\boldsymbol{w}}_{m}^{\left(\tau\right)}= \boldsymbol{w}_{m}^{({\tau}-1)}-\eta \nabla f_{m} \left(\boldsymbol{w}_{m}^{\left(\tau\right)},\boldsymbol{\xi}_{m}^{\left(\tau\right)}\right)$, where $\eta$ is the learning rate and $\nabla f_{m} \left(\boldsymbol{w}_{m}^{\left(\tau\right)},\boldsymbol{\xi}_{m}^{\left(\tau\right)}\right)$ is the stochastic gradient of the loss function generated by a batch $\boldsymbol{\xi}_{m}^{\left(\tau\right)}$ sampled from the local dataset.

The edge servers aggregate local models in their coverage area every $\tau_l$ local updates. During an edge aggregation, each vehicle uploads its model parameters to the edge server it is associated with at that moment. The edge server then aggregates all the models in its coverage and distributes the aggregated model back to the vehicles. We assume that each vehicle is associated with only the nearest edge server, and each edge aggregates the models of all the vehicles within its coverage.
The cloud server aggregates edge models every $\tau_e$ edge aggregations. 

Let $\mathcal{E}_n^{\left(\tau\right)}$ represent the vehicle set connected to the $n$-th edge server at iteration $\tau$. The edge aggregation can be expressed by ${\boldsymbol{w}}_{e,n}^{\left(\tau\right)}=\sum_{m\in \mathcal{E}_n^{\left(\tau\right)}}\alpha^{(\tau)}_{m,n} {\boldsymbol{w}}_{m}^{\left(\tau\right)}$, where $\alpha^{(\tau)}_{m,n}\!\triangleq\!\frac{|\mathcal{D}_{m}|}{\sum_{m'\in \mathcal{E}_n^{\left(\tau\right)}}|\mathcal{D}_{m'}|}$. Similarly, the cloud aggregation is expressed as $\boldsymbol{w}^{\left(\tau\right)} = \sum_n \theta^{(\tau)}_n {\boldsymbol{w}}_{e,n}^{\left(\tau\right)} $, where $ \theta^{(\tau)}_{n}\!\triangleq\!\frac{\sum_{m\in \mathcal{E}_n^{\left(\tau\right)}}|\mathcal{D}_{m}|}{\sum_{m=1}^M|\mathcal{D}_{m}|}$. 
The evolution of the $n$-th edge model is denoted by
\begin{equation}
\boldsymbol{w}_{e,n}^{\left(\tau\right)}=\begin{cases} 
        \Tilde{\boldsymbol{w}}_{e,n}^{\left(\tau\right)}=\sum_{m\in \mathcal{E}_n^{\left(\tau\right)}}\alpha_{m,n}^{\left(\tau\right)} \Tilde{\boldsymbol{w}}_{m}^{\left(\tau\right)},\quad \tau_l\mid\tau, \tau_l\tau_e\nmid\tau \\ 
        \sum_n \theta_n^{\left(\tau\right)} \Tilde{\boldsymbol{w}}_{e,n}^{\left(\tau\right)},\quad \tau_l\tau_e\mid\tau \end{cases}\hspace{-9pt}\!,\! 
\end{equation}
and the evolution of the $m$-th local model is denoted by
\begin{equation}
        \boldsymbol{w}_{m}^{\left(\tau\right)}=\begin{cases} 
        \Tilde{\boldsymbol{w}}_{m}^{\left(\tau\right)}=\boldsymbol{w}_{m}^{\left({\tau}-1\right)}-\eta\nabla f_{m} \left(\boldsymbol{w}_{m}^{\left({\tau}-1\right)}\right),\quad \tau_l\nmid\tau \\ 
        \boldsymbol{w}_{e,n}^{\left(\tau\right)},\quad \tau_l\mid\tau \end{cases}\hspace{-9pt}\!.\!
\end{equation}
Here $a\mid b$ means $b$ is 
divisible by $a$, while $a\nmid b$ means $b$ is not divisible by $a$.

\subsection{Data Distribution}
Since the data is sampled before training, the initial data distribution influences the distribution of training data. We mainly consider these three typical scenarios:

(1) \emph{i.i.d.} The learning tasks are homogeneous for all vehicles, and each vehicle owns a dataset of the same distribution.

(2) \emph{local non-i.i.d.} The learning tasks are heterogeneous for vehicles, such as trajectory prediction and edge caching. For these tasks, vehicles own datasets of different distributions.

(3) \emph{edge non-i.i.d.} The learning tasks are relevant to the environment and thus have a spatial correlation, such as intersection management, beam selection, and channel estimation. In the third scenario, the data distributions are non-i.i.d. across edges, but i.i.d. across the vehicles associated with the same edge server at the time of data collection.

\section{Convergence Analysis}
\label{Sec-4}
\subsection{Definitions}
We quantify data heterogeneity by the gradient difference.
    \begin{defi}
Define $\delta_m$ and $\Delta_n^{(\tau)}$ as
\begin{align}
    {\left\lVert {\nabla f_m\left(\boldsymbol{w}\right)-\nabla F\left(\boldsymbol{w}\right)} \right\rVert}
    &\le \delta_m, \\
    {\left\lVert {\nabla F_{e,n}^{(\tau)}\left(\boldsymbol{w}\right)-\nabla F\left(\boldsymbol{w}\right)} \right\rVert }
    &\le \Delta_n^{(\tau)},
\end{align}
\end{defi}
where $F_{e,n}^{(\tau)}\left(\boldsymbol{w}\right)=\sum_{m\in \mathcal{E}_n^{(\tau)}}\alpha_{m,n}^{(\tau)}f_{m}\left(\boldsymbol{w}\right).$ Here, $\delta_m$ and $\Delta_n^{(\tau)}$ represent the upper bound on the local gradient difference between the $m$-th vehicle and the cloud server, and the upper bound on the edge gradient difference between the $n$-th edge server and the cloud server at the $\tau$-th local iteration, respectively.

Furthermore, the edge-level local gradient difference is represented by
$
    \delta_n^{(\tau)}=\sum_{m\in \mathcal{E}_n^{(\tau)}} \alpha_{m,n}^{(\tau)}\delta_m,
$
and the cloud-level local gradient difference and edge gradient difference by 
$
    \delta=\sum_m \alpha_m\delta_m$ and $
    \Delta^{(\tau)}=\sum_n \theta_n^{(\tau)}\Delta_n^{(\tau)}
$, respectively, where $\alpha_m\!\triangleq\!\frac{|\mathcal{D}_m|}{|\mathcal{D}|}$.

\begin{remark}

The definition indicates that the local gradient difference is not time-varying, while the edge gradient difference is time-varying. This is because the local datasets remain unchanged during training, but the edge datasets are changing due to the mobility of vehicles.\vspace{-5pt}

\end{remark}

From the definition, $\delta_m$ and $\Delta_n^{(\tau)}$ can represent the data heterogeneity and mobility of vehicles. Furthermore, to prove the convergence of HFL, we define the following parameters:
\begin{itemize}
    \item $\boldsymbol{u}^{(\tau)}$: The virtual cloud parameter that records the weighted sum of local parameters at each iteration:
    \begin{equation}
        \boldsymbol{u}^{(\tau)}=\sum_{m}\alpha_{m} \boldsymbol{w}_{m}^{(\tau)}.
    \end{equation}\vspace{-8pt}
    
    \item $\boldsymbol{v}^{(\tau)}$: The virtual centralized parameter that synchronizes with the virtual cloud parameter periodically. It evolves as\vspace{-3pt}
    \begin{equation}
        \boldsymbol{v}^{(\tau)}=\begin{cases} 
        \Tilde{\boldsymbol{v}}^{(\tau)}=\boldsymbol{v}^{(\tau-1)}-\eta\nabla F\left(\boldsymbol{v}^{(\tau-1)}\right),\quad \tau_l\tau_e\nmid\tau \\ 
        \boldsymbol{u}^{(\tau)},\quad \tau_l\tau_e\mid\tau \end{cases}\hspace{-9pt}.
    \end{equation}\vspace{-10pt}
    
\end{itemize}


\subsection{Convergence}
We first introduce a general convergence bound for HFL.

\begin{assu}\label{assu1} We assume the following for all $m$:

\begin{enumerate}
\item $f_m\left(\boldsymbol{w}\right)$ is convex;
\item $f_m\left(\boldsymbol{w}\right)$ is $\rho$-Lipschitz, i.e., 

$\left\lVert {f_m\left(\boldsymbol{w}\right)-f_m\left(\boldsymbol{w}' \right)} \right\rVert\le \rho \left\lVert {\boldsymbol{w}-\boldsymbol{w}'}  \right\rVert$ for any $\boldsymbol{w},\boldsymbol{w}'$;

\item $f_m\left(\boldsymbol{w}\right)$ is $\beta$-smooth, i.e., 

$\left\lVert {\nabla f_m\left(\boldsymbol{w}\right)-\nabla f_m\left(\boldsymbol{w}'\right)} \right\rVert\le \beta \left\lVert {\boldsymbol{w}-\boldsymbol{w}'}  \right\rVert$ for any $\boldsymbol{w},\boldsymbol{w}'$.
\end{enumerate}

\end{assu}

\begin{prop}\label{prop1}
    For any $m$, assume $f_m\left(\boldsymbol{w}\right)$ is $\rho$-Lipschitz,$\beta$-smooth and convex, and denote the optimal model parameters as $\boldsymbol{w}^*$. Assume for some $\epsilon\ge 0$, 
    we have
    
    (1) $\eta\le\frac1{\beta}$,
    (2) $\eta\varphi-\frac{\rho U_k}{\tau_l\tau_e\epsilon^2}>0$ for all $k$,
    
    (3)$F\left(\Tilde{\boldsymbol{v}}^{\left(k\tau_l\tau_e\right)}\right)-F\left(\boldsymbol{w}^*\right)\ge\epsilon$,\label{cond}
    
    (4)$F\left(\boldsymbol{w}^{\left(k\tau_l\tau_e\right)}\right)\ge\epsilon$ for all $k$, where we define 
    \begin{equation}
    \varphi=\min\limits_k\frac{1-\frac{\beta\eta}{2}}{\left\lVert{\Tilde{\boldsymbol{v}}^{\left(\left(k-1\right)\tau_l\tau_e\right)}-\boldsymbol{w}^*}\right\rVert^2},\quad
    {\left\lVert{\boldsymbol{u}^{\left(k\tau_l\tau_e\right)}-\Tilde{\boldsymbol{v}}^{\left(k\tau_l\tau_e\right)}}\right\rVert}\le U_k.
    \end{equation}\vspace{-13pt}
    \\
    Here, $U_k$ represents an upper bound on the central-cloud difference ${\big\lVert{\boldsymbol{u}^{\left(k\tau_l\tau_e\right)}-\Tilde{\boldsymbol{v}}^{\left(k\tau_l\tau_e\right)}}\big\rVert}$ at the $k$-th cloud epoch. Then after $T\!\triangleq\!K\tau_l\tau_e $ local updates, we have the following convergence upper bound for HFL\vspace{-2pt}
    \begin{equation}
        F\left(\boldsymbol{w}^{\left(T\right)}\right)-F\left(\boldsymbol{w}^*\right)\le\frac1{T\eta\varphi-\frac{\rho\sum\limits_{k=1}^KU_k}{\epsilon^2}}.
    \end{equation}\label{prop1}\vspace{-10pt}
\end{prop}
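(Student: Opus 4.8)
\emph{Proof strategy.} The plan is to run the standard virtual-sequence argument for federated averaging, carried out simultaneously on the true cloud model $\boldsymbol{w}^{(\tau)}$, the virtual cloud parameter $\boldsymbol{u}^{(\tau)}$, and the virtual centralized parameter $\boldsymbol{v}^{(\tau)}$, with the time-varying bounds $U_k$ absorbing everything that is special about mobility. Two preliminary observations do the bookkeeping. First, $F=\sum_m\alpha_m f_m$ is a convex combination of the $f_m$, so it inherits convexity, $\rho$-Lipschitzness and $\beta$-smoothness from Assumption~\ref{assu1}. Second, because $\theta_n^{(\tau)}\alpha_{m,n}^{(\tau)}=\alpha_m$ for every vehicle $m$ in the coverage of edge $n$, at any cloud-aggregation iteration $\tau$ (that is, $\tau_l\tau_e\mid\tau$) all local models coincide with the cloud model, $\boldsymbol{w}_m^{(\tau)}=\boldsymbol{w}^{(\tau)}$, whence $\boldsymbol{u}^{(\tau)}=\boldsymbol{w}^{(\tau)}=\boldsymbol{v}^{(\tau)}$; in particular $F(\boldsymbol{w}^{(T)})=F(\boldsymbol{u}^{(T)})$. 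Introduce the shorthand $a_k\triangleq F(\boldsymbol{u}^{(k\tau_l\tau_e)})-F(\boldsymbol{w}^*)$ and $b_k\triangleq F(\tilde{\boldsymbol{v}}^{(k\tau_l\tau_e)})-F(\boldsymbol{w}^*)$; the goal then reduces to an upper bound on $a_K$.

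Step~1 (per-epoch progress of the centralized iterate). On a cloud epoch $(k-1)\tau_l\tau_e<\tau\le k\tau_l\tau_e$ the sequence $\boldsymbol{v}^{(\tau)}=\tilde{\boldsymbol{v}}^{(\tau)}$ is exactly gradient descent on $F$ started from $\boldsymbol{v}^{((k-1)\tau_l\tau_e)}=\boldsymbol{u}^{((k-1)\tau_l\tau_e)}$. I would combine the descent inequality $F(\boldsymbol{v}^{(\tau+1)})\le F(\boldsymbol{v}^{(\tau)})-\eta(1-\tfrac{\beta\eta}{2})\lVert\nabla F(\boldsymbol{v}^{(\tau)})\rVert^2$ (valid by $\beta$-smoothness since $\eta\le\tfrac1\beta$) with the convexity estimate $F(\boldsymbol{v}^{(\tau)})-F(\boldsymbol{w}^*)\le\lVert\nabla F(\boldsymbol{v}^{(\tau)})\rVert\,\lVert\boldsymbol{v}^{(\tau)}-\boldsymbol{w}^*\rVert$ to obtain a quadratic recursion for the gap $y_\tau\triangleq F(\boldsymbol{v}^{(\tau)})-F(\boldsymbol{w}^*)$. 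Dividing through by the positive product of consecutive gaps and using that $y_\tau$ is non-increasing over the epoch, so $y_\tau/y_{\tau+1}\ge 1$, gives $\tfrac1{y_{\tau+1}}-\tfrac1{y_\tau}\ge\eta\,\tfrac{1-\beta\eta/2}{\lVert\boldsymbol{v}^{(\tau)}-\boldsymbol{w}^*\rVert^2}$. Since a gradient step with $\eta\le\tfrac1\beta$ is non-expansive toward $\boldsymbol{w}^*$, $\lVert\boldsymbol{v}^{(\tau)}-\boldsymbol{w}^*\rVert^2$ is at most its value at the first iterate of the epoch, so each summand is at least $\eta\varphi$ with $\varphi$ as in the statement; summing over the $\tau_l\tau_e$ iterations of the epoch and identifying the two endpoints yields $\tfrac1{b_k}-\tfrac1{a_{k-1}}\ge\tau_l\tau_e\,\eta\,\varphi$. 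Positivity of every $y_\tau$ used here is ensured by condition~(3): $y_\tau$ is smallest at the epoch end, where it equals $b_k\ge\epsilon$.

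Step~2 (resynchronisation error) and Step~3 (telescoping). At $\tau=k\tau_l\tau_e$ the centralized iterate is snapped back onto $\boldsymbol{u}^{(k\tau_l\tau_e)}$, with $\lVert\boldsymbol{u}^{(k\tau_l\tau_e)}-\tilde{\boldsymbol{v}}^{(k\tau_l\tau_e)}\rVert\le U_k$, so $\rho$-Lipschitzness of $F$ gives $a_k\le b_k+\rho U_k$; combined with $a_k,b_k\ge\epsilon$ from conditions~(3) and (4) this yields $\tfrac1{a_k}-\tfrac1{b_k}=\tfrac{b_k-a_k}{a_kb_k}\ge-\tfrac{\rho U_k}{\epsilon^2}$. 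Adding this to the Step-1 inequality gives $\tfrac1{a_k}-\tfrac1{a_{k-1}}\ge\tau_l\tau_e\,\eta\,\varphi-\tfrac{\rho U_k}{\epsilon^2}$ for every $k=1,\dots,K$. Summing over $k$, discarding the positive term $\tfrac1{a_0}$, and writing $T=K\tau_l\tau_e$ gives $\tfrac1{a_K}\ge T\eta\varphi-\tfrac{\rho\sum_{k=1}^K U_k}{\epsilon^2}$; condition~(2) summed over $k$ makes the right-hand side strictly positive, so we may invert, and $a_K=F(\boldsymbol{w}^{(T)})-F(\boldsymbol{w}^*)$ by the preliminary identity. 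This is exactly the claimed bound.

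The delicate part is Step~1: one must check that every division performed while building and iterating the reciprocal recursion is legitimate (hence the need for conditions~(3) and (4) and for monotonicity of the centralized gap along each epoch), and one must correctly identify the worst-case per-step coefficient over an epoch through the non-expansiveness of gradient descent, which is what singles out the particular form of $\varphi$. It is also worth saying up front that this proposition treats the $U_k$ as given inputs; making the dependence on mobility and on the heterogeneity parameters $\delta_m,\delta_n^{(\tau)},\Delta^{(\tau)}$ explicit would require a separate lemma bounding $\lVert\boldsymbol{u}^{(k\tau_l\tau_e)}-\tilde{\boldsymbol{v}}^{(k\tau_l\tau_e)}\rVert$.
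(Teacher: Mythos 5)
Your proof is correct and coincides with what the paper intends: the paper's own ``proof'' is a one-line deferral to Lemma~2 of \cite{wang2019adaptive}, and your three steps (per-epoch reciprocal-gap recursion for the centralized iterate, a $\rho U_k/\epsilon^2$ penalty at each resynchronization, and telescoping over the $K$ cloud epochs) are exactly that argument, spelled out. The only wrinkle worth noting is that your Step~1 naturally bounds each summand by the distance from the epoch's actual starting point $\boldsymbol{u}^{\left(\left(k-1\right)\tau_l\tau_e\right)}$ to $\boldsymbol{w}^*$, whereas the paper's $\varphi$ is written with $\Tilde{\boldsymbol{v}}^{\left(\left(k-1\right)\tau_l\tau_e\right)}$; this is a discrepancy in the paper's definition rather than a gap in your reasoning.
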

\begin{proof}
    This is an extension of Lemma 2 in \cite{wang2019adaptive}.
\end{proof}\vspace{-3pt}

    Proposition \ref{prop1} indicates that for a fixed cloud epoch $K$, learning rate $\eta$, and aggregation periods $\tau_l, \tau_e$, the convergence bound is determined by $U_k$. Therefore, we then focus on bounding $U_k$.\vspace{-3pt}

\begin{lemma}\label{lemma1}
    We have the following for the virtual parameters:\vspace{-2pt}
    \begin{flalign}
        &\left\lVert{\boldsymbol{u}^{(\tau)}\!-\!\boldsymbol{\Tilde{v}}^{(\tau)}}\right\rVert\le&
    \end{flalign}\vspace{-15pt}
    \begin{align}
    \begin{cases}
    \left\lVert{\boldsymbol{u}^{(\tau-1)}\!-\!\boldsymbol{v}^{(\tau-1)}}\right\rVert
    \!+\!\eta\beta \sum_m\alpha_m \left\lVert{\boldsymbol{w}_m^{(\tau-1)}\!-\!\boldsymbol{v}^{(\tau-1)}}\right\rVert,\\
    \hspace{13em}\tau_l\nmid\tau\!-\!1 \\
    \left\lVert{\boldsymbol{u}^{(\tau-1)}\!-\!\boldsymbol{v}^{(\tau-1)}}\right\rVert
    \!+\!\eta\beta\hspace{-1pt} \sum_n\!\theta_n^{(\tau-1)} \!\left\lVert{{\boldsymbol{u}}_{n}^{(\tau-1)}\!-\!\boldsymbol{v}^{(\tau-1)}}\right\rVert,\\
    \hspace{13em}\tau_l\mid\tau\!-\!1,\tau_l\tau_e\nmid\tau\!-\!1 \\
    0, \hspace{12.05em}\tau_l\tau_e\mid\tau\!-\!1,
    \end{cases}\label{mobclouddiff}
    \end{align}\vspace{-6pt}


    where ${\boldsymbol{u}}^{(\tau)}_n$ denotes the virtual edge parameter satisfying
    \begin{equation}
        {\boldsymbol{u}}^{(\tau)}_n=\sum_{m\in \mathcal{E}_n^{(\tau)}}\alpha^{(\tau)}_{m,n} \boldsymbol{w}_{m}^{(\tau)}.
    \end{equation}\vspace{-14pt}
\end{lemma}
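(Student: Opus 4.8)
The plan is to derive the claimed three-way bound from a single one-step identity for the virtual cloud parameter, combined with a case analysis of what happened at iteration $\tau-1$. The starting observation is a \emph{conservation property}: the edge and cloud aggregations never change the $\alpha$-weighted average of the local models, so that $\boldsymbol{u}^{(\tau)}=\sum_m\alpha_m\boldsymbol{w}_m^{(\tau)}=\sum_m\alpha_m\tilde{\boldsymbol{w}}_m^{(\tau)}$ for \emph{every} $\tau$. For an edge aggregation this follows by grouping the vehicles according to their serving edge and using the weight identities $\sum_{m\in\mathcal{E}_n^{(\tau)}}\alpha_m=\theta_n^{(\tau)}$ and $\theta_n^{(\tau)}\alpha_{m,n}^{(\tau)}=\alpha_m$, together with the fact that the sets $\{\mathcal{E}_n^{(\tau)}\}_n$ partition the vehicle set; the cloud aggregation is handled identically, and at a pure local-update iteration the identity is immediate since $\boldsymbol{w}_m^{(\tau)}=\tilde{\boldsymbol{w}}_m^{(\tau)}$.

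Using $\tilde{\boldsymbol{w}}_m^{(\tau)}=\boldsymbol{w}_m^{(\tau-1)}-\eta\nabla f_m(\boldsymbol{w}_m^{(\tau-1)})$ and $\nabla F=\sum_m\alpha_m\nabla f_m$, the conservation property gives
\[
\boldsymbol{u}^{(\tau)}=\boldsymbol{u}^{(\tau-1)}-\eta\sum_m\alpha_m\nabla f_m\bigl(\boldsymbol{w}_m^{(\tau-1)}\bigr),\qquad \tilde{\boldsymbol{v}}^{(\tau)}=\boldsymbol{v}^{(\tau-1)}-\eta\sum_m\alpha_m\nabla f_m\bigl(\boldsymbol{v}^{(\tau-1)}\bigr).
\]
Subtracting, applying the triangle inequality, and then invoking $\beta$-smoothness of each $f_m$ yields
\[
\lVert\boldsymbol{u}^{(\tau)}-\tilde{\boldsymbol{v}}^{(\tau)}\rVert\le\lVert\boldsymbol{u}^{(\tau-1)}-\boldsymbol{v}^{(\tau-1)}\rVert+\eta\beta\sum_m\alpha_m\lVert\boldsymbol{w}_m^{(\tau-1)}-\boldsymbol{v}^{(\tau-1)}\rVert,
\]
which is already the first case of the lemma and, importantly, holds for all $\tau$.

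The remaining two cases come from specializing the last sum according to the state of the system at iteration $\tau-1$. If $\tau_l\mid\tau-1$ and $\tau_l\tau_e\nmid\tau-1$, then iteration $\tau-1$ ended with an edge aggregation, so $\boldsymbol{w}_m^{(\tau-1)}$ is constant over each cluster $\mathcal{E}_n^{(\tau-1)}$ and equals the virtual edge parameter $\boldsymbol{u}_n^{(\tau-1)}$ (since $\boldsymbol{w}_m^{(\tau-1)}=\boldsymbol{w}_{e,n}^{(\tau-1)}=\sum_{m'\in\mathcal{E}_n^{(\tau-1)}}\alpha_{m',n}^{(\tau-1)}\boldsymbol{w}_{m'}^{(\tau-1)}=\boldsymbol{u}_n^{(\tau-1)}$); regrouping $\sum_m\alpha_m\lVert\boldsymbol{w}_m^{(\tau-1)}-\boldsymbol{v}^{(\tau-1)}\rVert$ by edge and using $\sum_{m\in\mathcal{E}_n^{(\tau-1)}}\alpha_m=\theta_n^{(\tau-1)}$ turns it into $\sum_n\theta_n^{(\tau-1)}\lVert\boldsymbol{u}_n^{(\tau-1)}-\boldsymbol{v}^{(\tau-1)}\rVert$, which is the second case. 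If $\tau_l\tau_e\mid\tau-1$, then the cloud synchronization forces $\boldsymbol{w}_m^{(\tau-1)}=\boldsymbol{w}^{(\tau-1)}$ for every $m$, hence $\boldsymbol{u}^{(\tau-1)}=\boldsymbol{w}^{(\tau-1)}$, while $\boldsymbol{v}^{(\tau-1)}=\boldsymbol{u}^{(\tau-1)}$ by definition of the virtual centralized parameter; both terms on the right-hand side of the recursion then vanish, giving $\lVert\boldsymbol{u}^{(\tau)}-\tilde{\boldsymbol{v}}^{(\tau)}\rVert=0$, the third case.

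The step I expect to be the most delicate is the conservation property together with the bookkeeping of the mobility-dependent weights $\alpha_{m,n}^{(\tau)}$ and $\theta_n^{(\tau)}$ — in particular, one has to be sure that the edge clusters $\mathcal{E}_n^{(\tau)}$ cover all vehicles at every iteration so that the sums over edges telescope exactly to sums over all vehicles; once this is settled, the remaining manipulations are routine triangle-inequality and Lipschitz-gradient estimates.
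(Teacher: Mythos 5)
Your proof is correct and follows essentially the same route as the paper's: the one-step recursion $\boldsymbol{u}^{(\tau)}=\boldsymbol{u}^{(\tau-1)}-\eta\sum_m\alpha_m\nabla f_m(\boldsymbol{w}_m^{(\tau-1)})$ versus $\tilde{\boldsymbol{v}}^{(\tau)}=\boldsymbol{v}^{(\tau-1)}-\eta\nabla F(\boldsymbol{v}^{(\tau-1)})$, followed by the triangle inequality, $\beta$-smoothness, and a case analysis on whether iteration $\tau-1$ ended with a local update, an edge aggregation, or a cloud aggregation. Your explicit verification of the weight-conservation identity ($\theta_n^{(\tau)}\alpha_{m,n}^{(\tau)}=\alpha_m$, clusters partitioning the vehicles) is a useful detail the paper leaves implicit, but it does not change the argument.
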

\begin{proof}
    For the sake of simplicity, we assume $\boldsymbol{\xi}_{m}^{\left(\tau\right)}=\mathcal{D}_m$, since stochastic gradient descent has been proven to be an approximation to deterministic gradient descent\cite{tuor2018distributed}. So we substitute $\nabla f_{m} \left(\boldsymbol{w}_{m}^{\left(\tau\right)},\boldsymbol{\xi}_{m}^{\left(\tau\right)}\right)$ by $\nabla f_{m} \left(\boldsymbol{w}_{m}^{\left(\tau\right)}\right)$, and obtain
\vspace{-5pt}
    \begin{align}
    &\hspace{1.15em}\left\lVert \boldsymbol{u}^{(\tau)}-\Tilde{\boldsymbol{v}}^{(\tau)}\right\rVert  \vspace{2ex}\\
    &=\bigg\lVert \!\Big[
        \boldsymbol{u}^{(\tau-1)}\!-\!\eta\sum\limits_m\alpha_m\nabla f_m\left(\boldsymbol{w}_m^{(\tau-1)}\right)\Big]
        \\[-3pt]&\hspace{1.5em}
    \!-\!\left[
        \boldsymbol{v}^{(\tau-1)}\!-\!\eta\nabla F\left(\boldsymbol{v}^{(\tau-1)}\right)\right]\bigg\rVert\\[-4pt]
    &=\bigg\lVert \Big[
        \boldsymbol{u}^{(\tau-1)}\!-\!\boldsymbol{v}^{(\tau-1)}\Big]\\[-4pt]
    &\hspace{1.5em}\!-\!\eta\sum\limits_m
    \alpha_m\Big[\nabla f_m\left( \boldsymbol{w}_m^{(\tau-1)}\right)\!-\!\nabla f_m\left(\boldsymbol{v}^{(\tau-1)}\right)\Big] \bigg\rVert.\label{proof1}
    \end{align}
    When $\tau_l\nmid\tau\!-\!1$, we derive the first formula in \eqref{mobclouddiff} by the triangle inequality and $\beta$-smoothness. When $\tau_l\mid\tau\!-\!1$ and $\tau_l\tau_e\nmid\tau\!-\!1$, we have $\boldsymbol{w}_m^{(\tau-1)} = \boldsymbol{w}_{m'}^{(\tau-1)}$ if $m, m' \in\mathcal{E}_n^{(\tau-1)}$, and thus
    \begin{align}
        \eqref{proof1}=&\bigg\lVert{\left[
    \boldsymbol{u}^{(\tau-1)}\!-\!\boldsymbol{v}^{(\tau-1)}\right]}\\
    &{-\eta \sum\limits_n
    \theta_n^{(\tau-1)}\left[\nabla F_n\left( \boldsymbol{u}_n^{(\tau-1)}\right)\!-\!\nabla F_n\left(\boldsymbol{v}^{(\tau-1)}\right)\right]} \bigg\rVert,
    \end{align}
    so the second formula in \eqref{mobclouddiff} can be similarly derived.
    When $\tau_l\tau_e\mid\tau\!-\!1$, we have
    \begin{align}
        \eqref{proof1}
        &=    \left\lVert {\big[\boldsymbol{u}^{\left(\tau-1\right)}\!-\!\boldsymbol{v}^{\left(\tau-1\right)}\big]
        \!-\!\eta\big[\nabla F\left(\boldsymbol{u}^{\left(\tau-1\right)}\right)\!-\!\nabla F\left(\boldsymbol{v}^{\left(\tau-1\right)}\right)\big]} \right\rVert\\[-8pt]
        &=0,
    \end{align}
    where the second equality holds because $\boldsymbol{u}^{\left(\tau-1\right)}=\boldsymbol{v}^{\left(\tau-1\right)}$ when $\tau_l\tau_e\mid\tau\!-\!1$ according to the definition.
    \end{proof}
\begin{lemma}
    Let $\tau=k\tau_l\tau_e\!+\!\tau_0$ for some $\tau_0\in(0,\tau_l\tau_e]$. We have
\begin{align}
    &\left\lVert {\boldsymbol{w}_m^{(\tau)}\!-\!\boldsymbol{\Tilde{v}}^{(\tau)}} \right\rVert \le
    \frac{\delta_m}{\beta}[\left(1+\eta\beta\right)^{\tau_0}\!-\!1].
    \label{localdiff}
\end{align}
    
\end{lemma}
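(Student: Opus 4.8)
The plan is to prove the bound by induction on $\tau_0$, handling each iteration inside the $k$-th cloud epoch separately and reducing it to a one-step Gr\"onwall-type recursion $\lVert\boldsymbol{w}_m^{(\tau)}-\tilde{\boldsymbol{v}}^{(\tau)}\rVert\le(1+\eta\beta)\lVert\boldsymbol{w}_m^{(\tau-1)}-\tilde{\boldsymbol{v}}^{(\tau-1)}\rVert+\eta\delta_m$, after which the claimed estimate is simply the geometric sum $\eta\delta_m\sum_{j=0}^{\tau_0-1}(1+\eta\beta)^j=\frac{\delta_m}{\beta}\big[(1+\eta\beta)^{\tau_0}-1\big]$. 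The induction is anchored at $\tau_0=0$: when $\tau=k\tau_l\tau_e$ the cloud has just aggregated, so $\boldsymbol{w}_m^{(\tau)}=\boldsymbol{w}^{(\tau)}=\boldsymbol{u}^{(\tau)}=\boldsymbol{v}^{(\tau)}$ for every $m$ and the left-hand side vanishes.

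For the inductive step at a pure local-update iteration ($\tau_l\nmid\tau$), I would subtract the two update rules $\boldsymbol{w}_m^{(\tau)}=\boldsymbol{w}_m^{(\tau-1)}-\eta\nabla f_m(\boldsymbol{w}_m^{(\tau-1)})$ and $\tilde{\boldsymbol{v}}^{(\tau)}=\boldsymbol{v}^{(\tau-1)}-\eta\nabla F(\boldsymbol{v}^{(\tau-1)})$, exactly as in the proof of Lemma~\ref{lemma1}, insert the term $\pm\,\eta\nabla f_m(\boldsymbol{v}^{(\tau-1)})$, and then apply the triangle inequality together with the $\beta$-smoothness of $f_m$ from Assumption~\ref{assu1} on $\nabla f_m(\boldsymbol{w}_m^{(\tau-1)})-\nabla f_m(\boldsymbol{v}^{(\tau-1)})$ and the definition of $\delta_m$ on $\nabla f_m(\boldsymbol{v}^{(\tau-1)})-\nabla F(\boldsymbol{v}^{(\tau-1)})$, using that $\boldsymbol{v}^{(\tau-1)}=\tilde{\boldsymbol{v}}^{(\tau-1)}$ since $\tau-1$ is not a cloud-aggregation iteration. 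Substituting the induction hypothesis for $\lVert\boldsymbol{w}_m^{(\tau-1)}-\tilde{\boldsymbol{v}}^{(\tau-1)}\rVert$ and simplifying $(1+\eta\beta)\frac{\delta_m}{\beta}\big[(1+\eta\beta)^{\tau_0-1}-1\big]+\eta\delta_m=\frac{\delta_m}{\beta}\big[(1+\eta\beta)^{\tau_0}-1\big]$ closes this case.

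At an aggregation iteration I would invoke convexity of the norm. If $\tau_l\mid\tau$ and $\tau_l\tau_e\nmid\tau$, then $\boldsymbol{w}_m^{(\tau)}=\sum_{m'\in\mathcal{E}_n^{(\tau)}}\alpha_{m',n}^{(\tau)}\tilde{\boldsymbol{w}}_{m'}^{(\tau)}$, hence $\lVert\boldsymbol{w}_m^{(\tau)}-\tilde{\boldsymbol{v}}^{(\tau)}\rVert\le\sum_{m'}\alpha_{m',n}^{(\tau)}\lVert\tilde{\boldsymbol{w}}_{m'}^{(\tau)}-\tilde{\boldsymbol{v}}^{(\tau)}\rVert$, and each summand is controlled by running the local-update recursion above one step from $\boldsymbol{w}_{m'}^{(\tau-1)}$, which equals $\tilde{\boldsymbol{w}}_{m'}^{(\tau-1)}$ because $\tau-1$ is not an aggregation iteration when $\tau_l>1$. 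The cloud-aggregation iteration $\tau_l\tau_e\mid\tau$ is the analogous two-level convex combination, for which $\boldsymbol{u}^{(\tau)}=\sum_{m'}\alpha_{m'}\tilde{\boldsymbol{w}}_{m'}^{(\tau)}$ by the weight identity $\theta_n^{(\tau)}\alpha_{m,n}^{(\tau)}=\alpha_m$. Combining the three cases and unrolling over the $\tau_0$ iterations since the last synchronization yields the claim.

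I expect the main obstacle to be the bookkeeping of the per-vehicle constant $\delta_m$ across edge aggregations. The convex-combination estimate at an edge aggregation naturally returns the edge-averaged heterogeneity $\delta_n^{(\tau)}=\sum_{m'\in\mathcal{E}_n^{(\tau)}}\alpha_{m',n}^{(\tau)}\delta_{m'}$ rather than $\delta_m$ itself, so one must either carry the intermediate estimate at that edge-level granularity, where it collapses back to $\delta=\sum_m\alpha_m\delta_m$ once it is re-averaged with the weights $\alpha_m$ to bound $U_k$ (again via $\theta_n^{(\tau)}\alpha_{m,n}^{(\tau)}=\alpha_m$), or exploit that all vehicles are compared against the common point $\tilde{\boldsymbol{v}}^{(\tau)}$ so that the aggregated iterate remains inside the same ball. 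The other delicate point is to keep $\tilde{\boldsymbol{v}}^{(\tau)}$ rather than $\boldsymbol{v}^{(\tau)}$ on the right-hand side throughout, and to track precisely which indices $\tau-1$ are aggregation iterations so that the substitution $\boldsymbol{w}_m^{(\tau-1)}=\tilde{\boldsymbol{w}}_m^{(\tau-1)}$ is legitimate.
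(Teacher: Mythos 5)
Your base case, your one-step recursion $\lVert\boldsymbol{w}_m^{(\tau)}-\tilde{\boldsymbol{v}}^{(\tau)}\rVert\le(1+\eta\beta)\lVert\boldsymbol{w}_m^{(\tau-1)}-\tilde{\boldsymbol{v}}^{(\tau-1)}\rVert+\eta\delta_m$, and the unrolling to the geometric sum are exactly the argument behind Lemma 3 of \cite{wang2019adaptive}, which is all the paper offers as a proof (a bare citation to the two-level setting, where no aggregation occurs strictly inside the interval and the induction closes trivially). For $\tau_0\le\tau_l$, i.e.\ before the first edge aggregation of the cloud round, your induction is complete and correct. The genuine gap is the one you flag yourself and then leave open: at an edge aggregation with $\tau_l\mid\tau$ and $\tau_l\tau_e\nmid\tau$, vehicle $m$'s iterate is overwritten by $\boldsymbol{u}_n^{(\tau)}=\sum_{m'\in\mathcal{E}_n^{(\tau)}}\alpha_{m',n}^{(\tau)}\tilde{\boldsymbol{w}}_{m'}^{(\tau)}$, and convexity of the norm returns a bound proportional to $\delta_n^{(\tau)}=\sum_{m'}\alpha_{m',n}^{(\tau)}\delta_{m'}$, not to $\delta_m$. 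If $\delta_m<\delta_n^{(\tau)}$, the inductive hypothesis in per-vehicle form cannot be restored after the aggregation, so the claimed inequality is not recoverable by this induction for $\tau_0>\tau_l$ (and does not appear to hold in that range without further assumptions, e.g.\ equal $\delta_m$ within an edge). Your second proposed escape --- that all vehicles are compared against the common point $\tilde{\boldsymbol{v}}^{(\tau)}$ --- does not work, because the radius $\frac{\delta_m}{\beta}[(1+\eta\beta)^{\tau_0}-1]$ differs across $m$ and the aggregate is only guaranteed to lie in the ball of the \emph{averaged} radius.

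Your first proposed escape is the right repair and is what the paper implicitly relies on: Lemma \ref{lemma1} only consumes $\sum_m\alpha_m\lVert\boldsymbol{w}_m^{(\tau-1)}-\boldsymbol{v}^{(\tau-1)}\rVert$ and Lemma \ref{lemma3} only consumes $\sum_{m\in\mathcal{E}_n^{(\tau-1)}}\alpha_{m,n}^{(\tau-1)}\lVert\boldsymbol{w}_m^{(\tau-1)}-\boldsymbol{v}^{(\tau-1)}\rVert$, and the correspondingly weighted versions of \eqref{localdiff} do survive aggregations, since a convex combination of convex combinations is again a convex combination with the product weights $\theta_n^{(\tau)}\alpha_{m,n}^{(\tau)}=\alpha_m$. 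So the statement you can actually prove, and the one the downstream lemmas need, is either the restriction of \eqref{localdiff} to $\tau_0\in(0,\tau_l]$ (after which Lemma \ref{lemma3}'s own recursion carries the estimate past edge aggregations with $\delta_n^{(\tau)}$ in place of $\delta_m$), or the edge-/cloud-averaged form with $\delta_n^{(\tau)}$ or $\delta$ as the constant. You located the fault line precisely; to have a proof rather than a plan, you must commit to one of these reformulations and carry it through, because the per-vehicle statement as written cannot be pushed across an edge aggregation.
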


\begin{proof}
    This is follows from Lemma 3 in \cite{wang2019adaptive}.
\end{proof}

\begin{lemma}\label{lemma3}
    Let $\tau=k\tau_l\tau_e\!+\!\tau_0$ for some $\tau_0\in(0,\tau_l\tau_e]$. We have
    \begin{align}
        \left\lVert {{\boldsymbol{u}}_{n}^{(\tau)}\!-\!\boldsymbol{\Tilde{v}}^{(\tau)}} \right\rVert 
        \!\le\! \frac{\delta_n^{(\tau)}}{\beta}[\left(1+\eta\beta\right)^{\tau_0}\!-\!1]\!-\!\eta\tau_0\!\left(\!\delta_n^{(\tau)}\!-\!\Delta_n^{(\tau)}\!\right)\!.\label{mobedgediff}
\end{align}
\end{lemma}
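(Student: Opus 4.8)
The plan is to prove \eqref{mobedgediff} by a telescoping recursion over the $\tau_0$ local iterations that have elapsed since the last cloud aggregation, in direct analogy with the local-model bound \eqref{localdiff}, but carried out for the virtual \emph{edge} parameter $\boldsymbol{u}_n^{(\tau)}$ --- which is where the edge aggregations enter and where the negative correction term originates. Writing $\tau=k\tau_l\tau_e+\tau_0$, set $b_t\triangleq\lVert\boldsymbol{u}_n^{(k\tau_l\tau_e+t)}-\Tilde{\boldsymbol{v}}^{(k\tau_l\tau_e+t)}\rVert$ for $t=0,1,\dots,\tau_0$. Because iteration $k\tau_l\tau_e$ is a cloud aggregation, every local model there equals $\boldsymbol{w}^{(k\tau_l\tau_e)}=\boldsymbol{u}^{(k\tau_l\tau_e)}=\boldsymbol{v}^{(k\tau_l\tau_e)}$, hence $\boldsymbol{u}_n^{(k\tau_l\tau_e)}=\boldsymbol{v}^{(k\tau_l\tau_e)}$ and $b_0=0$. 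The target is the one-step estimate $b_t\le b_{t-1}+\eta\delta_n^{(\tau)}[(1+\eta\beta)^{t-1}-1]+\eta\Delta_n^{(\tau)}$; unrolling from $b_0=0$ and using $\sum_{j=0}^{\tau_0-1}[(1+\eta\beta)^j-1]=\frac{(1+\eta\beta)^{\tau_0}-1}{\eta\beta}-\tau_0$ then reproduces the right-hand side of \eqref{mobedgediff} exactly.

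For the one-step bound, observe that at iteration $\tau$, whether it is a pure local update ($\tau_l\nmid\tau$) or an edge aggregation ($\tau_l\mid\tau$, $\tau_l\tau_e\nmid\tau$), the virtual edge parameter satisfies $\boldsymbol{u}_n^{(\tau)}=\sum_{m\in\mathcal{E}_n^{(\tau)}}\alpha_{m,n}^{(\tau)}(\boldsymbol{w}_m^{(\tau-1)}-\eta\nabla f_m(\boldsymbol{w}_m^{(\tau-1)}))$, since a physical edge aggregation merely redistributes to the vehicles the weighted average that $\boldsymbol{u}_n^{(\tau)}$ already represents. Subtracting $\Tilde{\boldsymbol{v}}^{(\tau)}=\boldsymbol{v}^{(\tau-1)}-\eta\nabla F(\boldsymbol{v}^{(\tau-1)})$ and applying the triangle inequality gives $b_t\le b_{t-1}+\eta\lVert\sum_m\alpha_{m,n}^{(\tau)}(\nabla f_m(\boldsymbol{w}_m^{(\tau-1)})-\nabla F(\boldsymbol{v}^{(\tau-1)}))\rVert$. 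I then split $\nabla f_m(\boldsymbol{w}_m^{(\tau-1)})-\nabla F(\boldsymbol{v}^{(\tau-1)})=(\nabla f_m(\boldsymbol{w}_m^{(\tau-1)})-\nabla f_m(\boldsymbol{v}^{(\tau-1)}))+(\nabla f_m(\boldsymbol{v}^{(\tau-1)})-\nabla F(\boldsymbol{v}^{(\tau-1)}))$: the first part is $\le\beta\lVert\boldsymbol{w}_m^{(\tau-1)}-\boldsymbol{v}^{(\tau-1)}\rVert$ by $\beta$-smoothness, hence $\le\delta_m[(1+\eta\beta)^{t-1}-1]$ after invoking \eqref{localdiff}, so its $\alpha_{m,n}^{(\tau)}$-weighted sum contributes $\eta\delta_n^{(\tau)}[(1+\eta\beta)^{t-1}-1]$ using $\delta_n^{(\tau)}=\sum_m\alpha_{m,n}^{(\tau)}\delta_m$; the second part, once $\alpha$-averaged, equals $\nabla F_{e,n}^{(\tau)}(\boldsymbol{v}^{(\tau-1)})-\nabla F(\boldsymbol{v}^{(\tau-1)})$, whose norm is at most $\Delta_n^{(\tau)}$ by the definition of $\Delta_n^{(\tau)}$. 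This is the crux: the edge aggregation lets the leading first-order bias be the small edge-averaged quantity $\Delta_n^{(\tau)}$ rather than the average $\delta_n^{(\tau)}$ of per-vehicle biases, which is exactly what yields the term $-\eta\tau_0(\delta_n^{(\tau)}-\Delta_n^{(\tau)})$ (non-positive for the natural choice of $\Delta_n^{(\tau)}$, since $\nabla F_{e,n}^{(\tau)}-\nabla F$ is itself an $\alpha$-weighted average of the vectors $\nabla f_m-\nabla F$). Combining the two parts gives the one-step recursion.

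The main obstacle is the time-varying cell membership $\mathcal{E}_n^{(\tau)}$: the telescoping step tacitly identifies $\sum_{m\in\mathcal{E}_n^{(\tau)}}\alpha_{m,n}^{(\tau)}\boldsymbol{w}_m^{(\tau-1)}$ with $\boldsymbol{u}_n^{(\tau-1)}$, which is literally valid only when $\mathcal{E}_n^{(\tau)}=\mathcal{E}_n^{(\tau-1)}$ and the weights coincide, i.e., when no vehicle enters or leaves cell $n$ between the two iterations. The clean route is to work in the regime where vehicle--edge associations are fixed \emph{within} a cloud epoch and change only \emph{across} cloud epochs; this is also what makes the single pair $(\delta_n^{(\tau)},\Delta_n^{(\tau)})$ appearing in the statement meaningful over the whole window $(k\tau_l\tau_e,\tau]$. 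Absent that assumption one must carry the arrival/departure terms explicitly and dominate them by a worst case --- every vehicle's deviation from $\boldsymbol{v}^{(\tau')}$ is controlled by \eqref{localdiff} regardless of which cell it currently occupies --- which preserves the functional form at the price of a looser constant. The remaining steps --- checking that $\boldsymbol{v}^{(\tau-1)}=\Tilde{\boldsymbol{v}}^{(\tau-1)}$ throughout the relevant range, verifying the $t=0,1$ boundary cases, and evaluating the geometric sum --- are routine; note that only $\beta$-smoothness and the gradient-difference definitions are needed, not convexity.
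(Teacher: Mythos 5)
Your proof is correct and follows essentially the same route as the paper's: the same one-step recursion for $\lVert\boldsymbol{u}_n^{(\tau)}-\Tilde{\boldsymbol{v}}^{(\tau)}\rVert$ obtained by splitting $\nabla f_m(\boldsymbol{w}_m^{(\tau-1)})-\nabla F(\boldsymbol{v}^{(\tau-1)})$ into a $\beta$-smoothness term (then bounded via \eqref{localdiff}) and an edge-gradient-difference term bounded by $\Delta_n$, followed by unrolling/induction from the cloud-synchronization point. Your explicit treatment of the geometric sum and your caveat about the time-varying membership $\mathcal{E}_n^{(\tau)}$ (which the paper's own proof also tacitly assumes fixed within the window, writing $\sum_{m\in\mathcal{E}_n}$ and a single $\delta_n^{(\tau)},\Delta_n^{(\tau)}$) only make the argument more careful than the original.
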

\begin{proof}
    Similarly to the proof of Lemma \ref{lemma1}, we can write the virtual edge parameter in the recursive form:
    \begin{align}
        \left\lVert {\boldsymbol{u}_n^{\left(\tau\right)}\!-\!\Tilde{\boldsymbol{v}}^{\left(\tau\right)}} \right\rVert 
        \le&\left\lVert {\boldsymbol{u}_n^{\left(\tau-1\right)}\!-\!\boldsymbol{v}^{\left(\tau-1\right)}} \right\rVert \\ 
         &\!+\!\eta\beta\hspace{-4pt}\sum\limits_{m\in \mathcal{E}_n}\hspace{-4pt}\alpha_{m,n}^{(\tau-1)}\hspace{-2pt}
        \left\lVert  {\boldsymbol{w}_m^{\left(\tau-1\right)}\!-\!\boldsymbol{v}^{\left(\tau-1\right)}\hspace{-1pt}}\right\rVert\!+\!\eta\Delta_n^{(\tau-1)}\hspace{-1pt}.
    \end{align}\vspace{-17pt}
    
    Since $\boldsymbol{v}^{\left(\tau-1\right)}=\boldsymbol{\tilde{v}}^{\left(\tau-1\right)}$ when $\tau_l\tau_e\nmid\tau$, we substitute $\left\lVert\boldsymbol{w}_m^{\left(\tau-1\right)}-\boldsymbol{v}^{\left(\tau-1\right)}\right\rVert$ by \eqref{localdiff}. Then we can prove Lemma \ref{lemma3} by mathematical induction.
\end{proof}

\begin{theorem}\label{theo1}
 For data-heterogeneous HFL with mobility, the central-cloud difference $\big\lVert \boldsymbol{u}^{(k\tau_l\tau_e)}\!-\!\Tilde{\boldsymbol{v}}^{(k\tau_l\tau_e)}\big\rVert$ has an upper bound
\begin{flalign}
 &U_{k}=
r(\tau_l\tau_e, \eta, \delta)\!-\!\eta\tau_l[\frac12\tau_e\left(\tau_e\!-\!1\right)\delta\!-\!\sum\limits_{j=1}^{\tau_e-1}j\Delta^{[k\tau_e+j]}],&\label{theo1eq}
\end{flalign}
where $r(\tau, \eta, \delta)=
        \frac{\delta}{\beta}\big[\left(1+\eta\beta\right)^{\tau}-1\big]-\tau\eta\delta$, and $\Delta^{[j]}=\Delta^{(j\tau_l)}$.
\end{theorem}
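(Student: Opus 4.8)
\emph{Proof proposal.} The plan is to unroll the one-step recursion of Lemma~\ref{lemma1} over a single cloud epoch and then insert the closed-form bounds of Lemma 2 and Lemma~\ref{lemma3}. Write $A^{(\tau)}\!\triangleq\!\lVert\boldsymbol{u}^{(\tau)}-\tilde{\boldsymbol{v}}^{(\tau)}\rVert$ and fix the block of $\tau_l\tau_e$ iterations lying between two consecutive cloud aggregations. The third case of \eqref{mobclouddiff} gives $A=0$ at the first iteration of this block (right after a cloud aggregation all local models coincide, so both $\boldsymbol{u}$ and $\tilde{\boldsymbol{v}}$ are one full-gradient step away from the common point); throughout the interior of the block $\tau_l\tau_e\nmid\tau$, hence $\boldsymbol{v}^{(\tau)}=\tilde{\boldsymbol{v}}^{(\tau)}$ and the first two cases of \eqref{mobclouddiff} read $A^{(\tau-1)}$ plus an $\eta\beta$-scaled term equal to $\sum_m\alpha_m\lVert\boldsymbol{w}_m^{(\tau-1)}-\tilde{\boldsymbol{v}}^{(\tau-1)}\rVert$ at a pure local step and to $\sum_n\theta_n^{(\tau-1)}\lVert\boldsymbol{u}_n^{(\tau-1)}-\tilde{\boldsymbol{v}}^{(\tau-1)}\rVert$ at an edge-aggregation step. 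Telescoping from the reset point to the end of the block then expresses $A$ at the end of the block as a single sum of these $\eta\beta$-scaled perturbations over the $\tau_l\tau_e-1$ interior iterations.

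Next I would bound each perturbation. At a local step whose offset into the block is $\tau_0$, Lemma 2 bounds $\lVert\boldsymbol{w}_m^{(\tau)}-\tilde{\boldsymbol{v}}^{(\tau)}\rVert$ by $\tfrac{\delta_m}{\beta}[(1+\eta\beta)^{\tau_0}-1]$, and since $\sum_m\alpha_m\delta_m=\delta$ and $\sum_m\alpha_m=1$ the $\alpha_m$-average collapses to $\tfrac{\delta}{\beta}[(1+\eta\beta)^{\tau_0}-1]$. At an edge-aggregation step, whose offset is necessarily of the form $\tau_0=j\tau_l$ with $1\le j\le\tau_e-1$, Lemma~\ref{lemma3} gives a per-edge bound; to average it with the weights $\theta_n^{(\tau)}$ I would use the two identities $\theta_n^{(\tau)}\alpha_{m,n}^{(\tau)}=\alpha_m$ and $\sum_n\sum_{m\in\mathcal{E}_n^{(\tau)}}(\cdot)=\sum_m(\cdot)$, which give $\sum_n\theta_n^{(\tau)}\delta_n^{(\tau)}=\delta$ and $\sum_n\theta_n^{(\tau)}\Delta_n^{(\tau)}=\Delta^{(\tau)}$. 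Hence the edge term also collapses to cloud-level quantities: a geometric part $\tfrac{\delta}{\beta}[(1+\eta\beta)^{j\tau_l}-1]$ together with a mobility correction proportional to $j\tau_l(\delta-\Delta^{(\tau)})$, which in the notation $\Delta^{[j']}=\Delta^{(j'\tau_l)}$ carries the epoch index $k\tau_e+j$.

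Finally I would collect terms. The geometric contributions of the local steps and of the edge steps together range over \emph{every} offset $\tau_0=1,\dots,\tau_l\tau_e-1$, so after multiplication by $\eta\beta$ the closed-form geometric series $\eta\delta\sum_{\tau_0=1}^{\tau_l\tau_e-1}[(1+\eta\beta)^{\tau_0}-1]$ simplifies, using $(1+\eta\beta)\tfrac{\delta}{\beta}=\tfrac{\delta}{\beta}+\eta\delta$, to exactly $r(\tau_l\tau_e,\eta,\delta)=\tfrac{\delta}{\beta}[(1+\eta\beta)^{\tau_l\tau_e}-1]-\tau_l\tau_e\eta\delta$. The mobility corrections survive only at the $\tau_e-1$ interior edge-aggregation steps and sum to $-\eta\tau_l\sum_{j=1}^{\tau_e-1}j\big(\delta-\Delta^{[k\tau_e+j]}\big)$; rewriting $\sum_{j=1}^{\tau_e-1}j=\tfrac12\tau_e(\tau_e-1)$ puts this in the bracketed form of \eqref{theo1eq}. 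I expect the bookkeeping here to be the main obstacle: one must keep the time-varying weights $\alpha_{m,n}^{(\tau)},\theta_n^{(\tau)}$ and edge sets $\mathcal{E}_n^{(\tau)}$ consistent as they change from iteration to iteration, verify that the local-step and edge-step geometric terms genuinely reconstitute the full sum over $\tau_0$, and track the epoch offsets appearing in the $\Delta^{[\,\cdot\,]}$ indices, since mobility enters the bound only through those time-varying terms and aligning their indices correctly is exactly what makes the ``mobility mitigates heterogeneity'' statement precise.
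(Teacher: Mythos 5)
Your proposal is the paper's proof: the paper's own argument is precisely to substitute \eqref{localdiff} and \eqref{mobedgediff} into \eqref{mobclouddiff} and sum over $\tau=k\tau_l\tau_e+1,\dots,(k+1)\tau_l\tau_e$, and your telescoping from the post-cloud-aggregation reset, the weight identities $\theta_n^{(\tau)}\alpha_{m,n}^{(\tau)}=\alpha_m$, the index bookkeeping $\Delta^{(j\tau_l(k\tau_e+j)/(k\tau_e+j))}=\Delta^{[k\tau_e+j]}$ at offset $\tau_0=j\tau_l$, and the reassembly of the geometric contributions into $r(\tau_l\tau_e,\eta,\delta)$ all check out. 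The one constant that deserves scrutiny is the mobility term: in the second case of \eqref{mobclouddiff} the prefactor $\eta\beta$ multiplies the \emph{entire} Lemma~\ref{lemma3} bound, so a literal substitution yields $-\eta^2\beta\tau_l\sum_{j}j\bigl(\delta-\Delta^{[k\tau_e+j]}\bigr)$ rather than the $-\eta\tau_l\sum_{j}j\bigl(\delta-\Delta^{[k\tau_e+j]}\bigr)$ that you (and the theorem as stated) write, and since the bracketed quantity is nonnegative ($\Delta^{[\cdot]}\le\delta$), replacing $\eta^2\beta$ by $\eta\ge\eta^2\beta$ makes the bound \emph{smaller} and hence is not a valid relaxation --- so this step needs either an explicit justification or an $\eta^2\beta$ in place of $\eta$ in \eqref{theo1eq}.
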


\begin{proof}    
The proof follows by substituting \eqref{localdiff} and \eqref{mobedgediff} into \eqref{mobclouddiff}, and summing up from $\tau=k\tau_l\tau_e\!+\!1$ to $\tau=(k\!+\!1)\tau_l\tau_e$.
\end{proof}

    This theorem indicates that mobility impacts the convergence upper bound of HFL by changing the edge gradient difference $\Delta^{[j]}$. If $\Delta^{[j]}=\Delta^{[0]}$ for all $j$, then the theorem degenerates into the case without mobility.  

    Specifically, considering the data distribution mentioned in Section \ref{Sec-3}, we have $\Delta^{[j]}\approx0$ for all $j$ in the i.i.d. case, since the mobility of vehicles does not impact the statistics of the datasets within the coverage area of an edge server. However, for the edge non-i.i.d. case, we have $\Delta^{[0]}\gg0$, since the initial datasets of each edge group is different. As vehicles move during training, the data across edge groups gradually mix up, so $\Delta^{[j]}$ may decrease over iterations. From \eqref{prop1} and \eqref{theo1eq}, $\Delta^{[j]}$ is positively correlated with the convergence upper bound, so mobility may increase the convergence speed in the edge non-i.i.d. case by decreasing the degree of data heterogeneity.
\section{Simulation Results}
\label{Sec-5}


\begin{figure*}[!t]
	\centering
    \subfigure[Comparison of the max test accuracy within 600 cloud epochs for different initial data distribution.]{\label{fig2}	
		\includegraphics[width=0.3\textwidth]{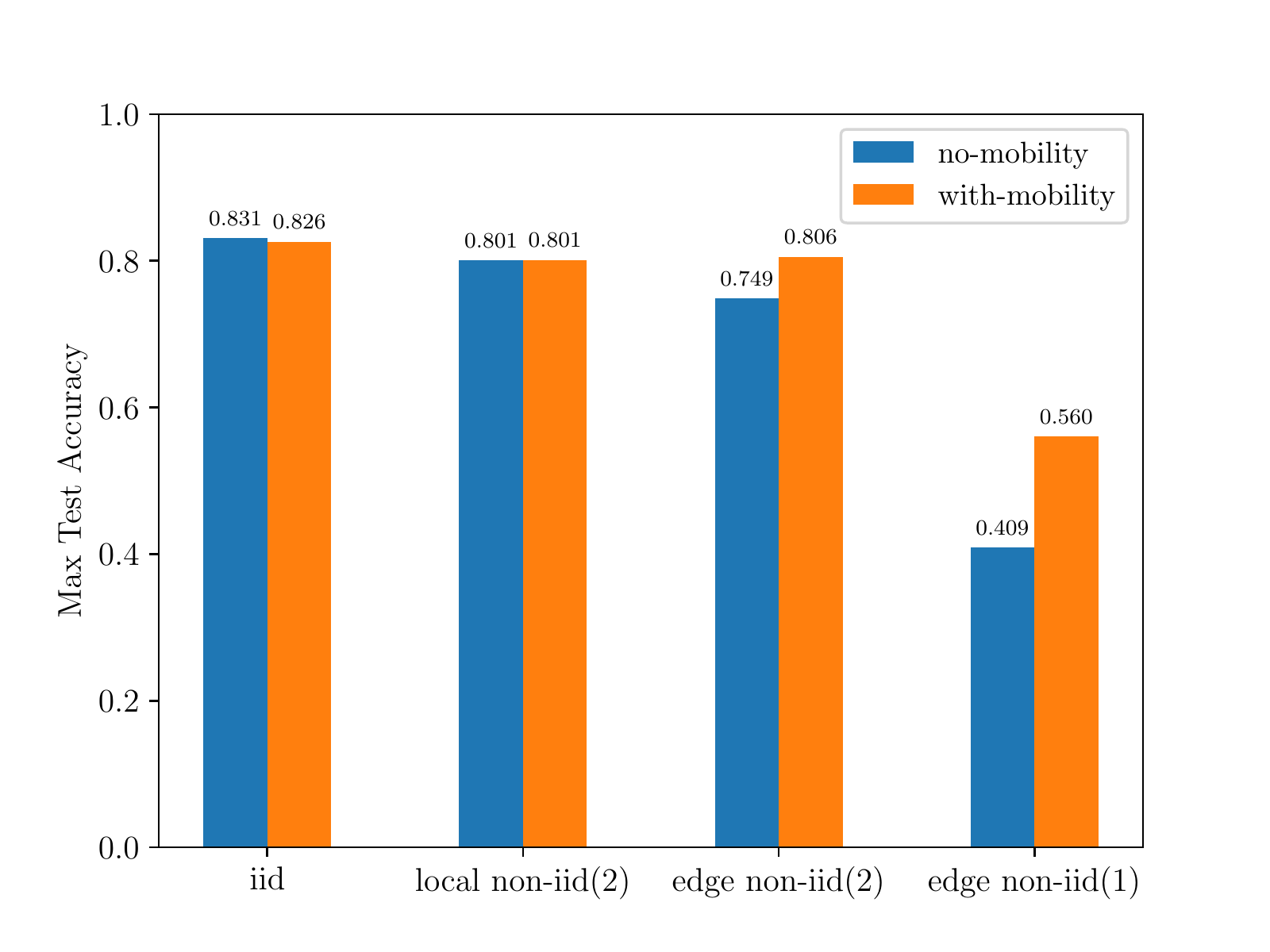} }
	\subfigure[Accuracy on the test dataset for the edge non-i.i.d.(2) case with different vehicle speeds, resuming from the model of 60\% test accuracy.]{\label{fig5}			
		\includegraphics[width=0.3\textwidth]{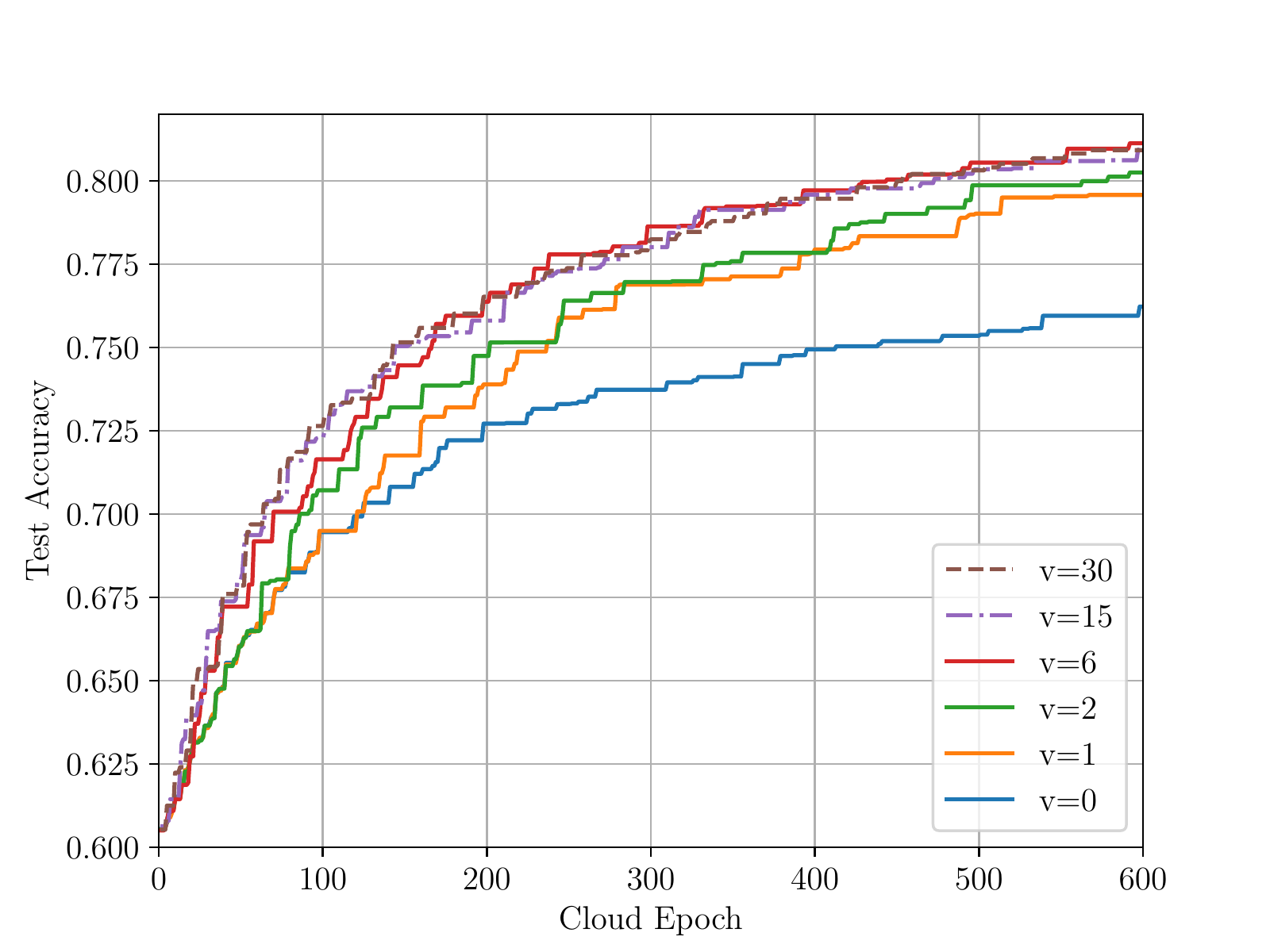} }		
	\subfigure[Rounds to reach the target accuracies for the edge non-i.i.d.(2) case, resuming from the model of 60\% test accuracy.]{\label{fig6}	
		\includegraphics[width=0.3\textwidth]{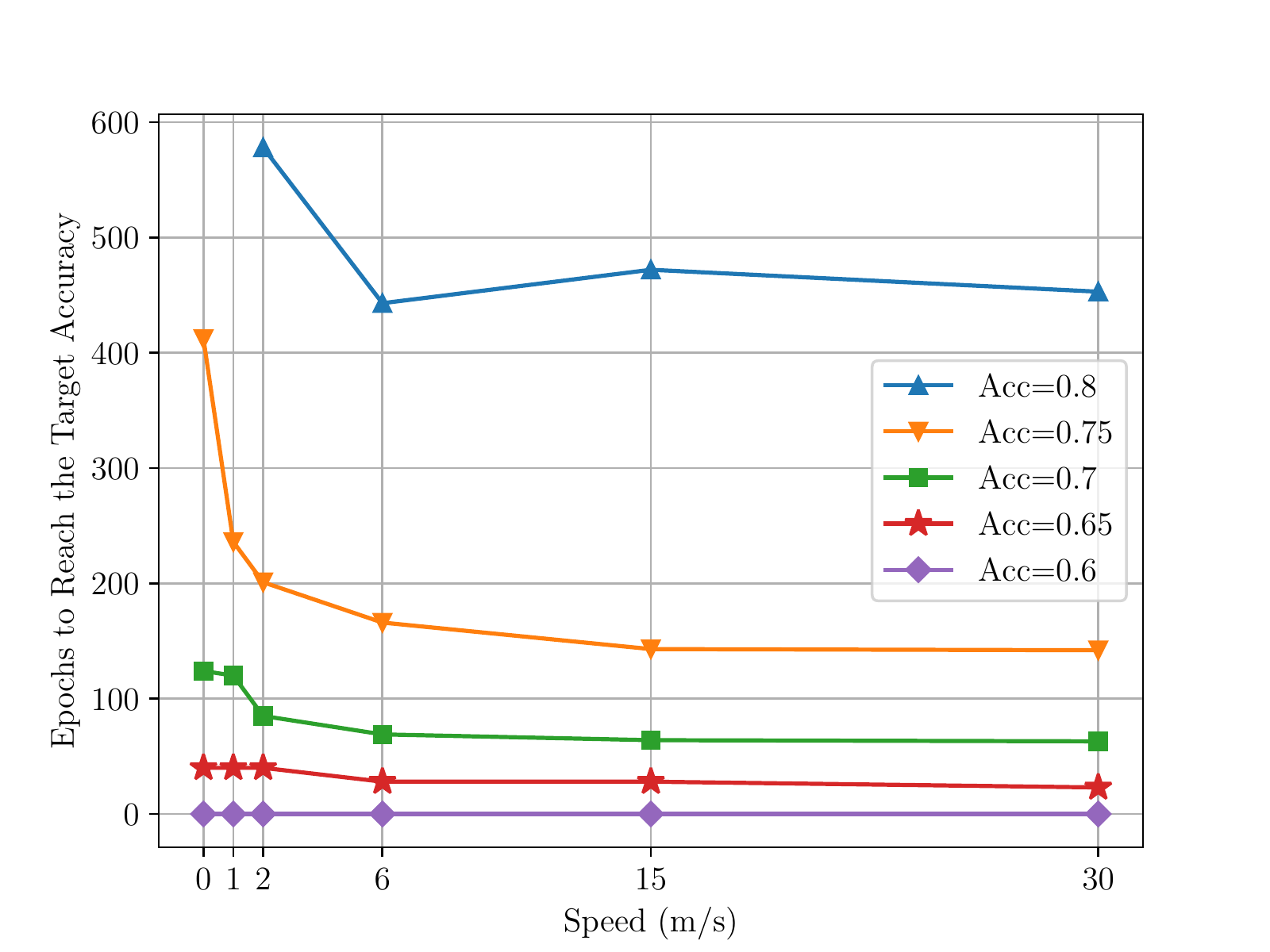} }
    
	\caption{Performance of HFL with different initial data distributions and vehicle speeds.}
	\label{fig5-6}
\end{figure*}

\subsection{Settings}
We consider a city road system on a square grid, where each side of the square is covered by an edge server, e.g., a base station or a road side unit. There are $N=4$ edge servers and a total of $M=32$ vehicles in the system, and each edge server covers 8 vehicles initially. The initial positions of vehicles are randomly distributed. We use the Simulation of Urban MObility (SUMO) simulation platform with the Manhattan model. Each side of the road is $a$ meters long, and vehicles travel on the road at a maximum speed of $v$ meters per second (m/s) and slow down when crossing the intersection. The interval between each edge aggregation is assumed to be $1$ second. We set the default speed to $v=30$. 




We conduct experiments on the CIFAR-10 dataset. To create non-i.i.d. data by partitioning the labels, we choose 40000 training and 8000 test samples in total belonging to 8 classes. For the i.i.d. case, we uniformly divide the training samples into $M$ disjoint subsets, and allocate each subset to one vehicle. For the local non-i.i.d. case, we sort all the data by labels and allocate $l$ classes to each vehicle (denoted by `local non-i.i.d.($l$)' in the following), with all the vehicles holding the same number of samples. For the edge non-i.i.d. case, we first allocate the sorted data to edge servers so that each edge server owns $l$ classes of samples, and then uniformly allocate these samples to the vehicles within their coverage (denoted by `edge non-i.i.d.($l$)' in the following). 

We train a 3-layer CNN with a batch size of 20. The learning rate is set to $\eta=0.1$ for all the vehicles with no momentum and learning rate decay. The local and edge epochs are set to $\tau_e=10$ and $\tau_l=6$. We run the Mob-HierFAVG algorithm for 600 cloud epochs.


\subsection{Performance}
    The performance of HFL under different initial data distributions is shown in Fig \ref{fig2}. To address the impact of mobility, we consider two typical speeds for each distribution, $v=0$ (denoted by `no-mobility') and $v=30$ (denoted by `with-mobility'). For the i.i.d. case and the local non-i.i.d. case, the maximum test accuracies achieved in the no-mobility and with-mobility scenarios are approximately equal. On the other hand, in the edge non-i.i.d. case, the mobility clearly increases the performance. When $l=2$, mobility improves the accuracy by 5.7\% (74.9\% to 80.6\%); and when $l=1$, mobility improves the accuracy by 15.1\% (40.9\% to 56.0\%). These results are aligned with our conjecture in Section \ref{Sec-4}. Furthermore, they show that for the edge non-i.i.d. case, as the number of classes an edge server holds decreases, the accuracy of HFL decreases, while the accuracy improvement brought by mobility increases. 
    
    We then take a further look at the edge non-i.i.d. case. Assume that we start training from a pre-trained model with a test accuracy of 60\%. 
    The results in Fig. \ref{fig5} demonstrate that the test accuracy of the scenarios with mobility, i.e., $v>0$, is obviously higher than the one achieved when $v=0$. Besides, Fig. \ref{fig6} illustrates that a higher speed generally results in faster convergence. In particular, it takes only 142 cloud epochs in the $v=30$ case to reach an accuracy of 75\%, reduced by 65.5\% compared with the $v=0$ case (412 epochs), and 39.8\% compared with the $v=1$ case (236 epochs). These results indicate that mobility can indeed increase the convergence speed and the final test accuracy of HFL. However, the improvement brought by mobility is also limited.
    In our experiments, both the final accuracy and the convergence speed typically saturate beyond $v=6$. This is because the datasets are already sufficiently mixed at this speed, and no further gains can be achieved with even higher mobility.
\section{conclusion}
\label{Sec-6}
In this article, we have investigated the impact of mobility on the convergence of data-heterogeneous HFL. Based on the HFL model with mobility, the convergence analysis of HFL has been conducted, exposing the influence of mobility on the data heterogeneity, and hence, the convergence speed of HFL. Experiments carried out on the CIFAR-10 dataset and the SUMO platform demonstrates the benefits of mobility in the case of edge non-i.i.d. initial data distribution. Specifically, mobility enables an increase in test accuracy by up to 15.1\%. Also, the results show that a higher speed results in faster convergence up to a certain value beyond which the achieved accuracy saturates.

\end{document}